\newcommand{\bRd}{\mathbb{R}^d}
\newcommand{\norm}[1]{\left\lVert#1\right\rVert}
\newcommand{\bx}{\mathbf{x}}
\newcommand{\by}{\mathbf{y}}
\newcommand{\bz}{\mathbf{z}}
\newcommand{\bq}{\mathbf{q}}
\newtheorem*{theorem*}{Theorem}
\newtheorem{lemma}{Lemma}[section]
\newtheorem{corollary}{Corollary}[section]
\title{Reversible Deep Equilibrium Models}
\author{Sam McCallum \qquad Kamran Arora \qquad James Foster \\[2pt]
	Department of Mathematical Sciences, University of Bath \\
	\texttt{\{sm2942, ka679, jmf68\}@\hspace{0.8pt}bath.ac.uk}}
\begin{document}

\maketitle

\begin{abstract}
Deep Equilibrium Models (DEQs) are an interesting class of implicit model where the model output is implicitly defined as the fixed point of a learned function. These models have been shown to outperform explicit (fixed-depth) models in large-scale tasks by trading many deep layers for a single layer that is iterated many times. However, gradient calculation through DEQs is approximate. This often leads to unstable training dynamics and requires regularisation or many function evaluations to fix. Here, we introduce Reversible Deep Equilibrium Models (RevDEQs) that allow for exact gradient calculation, no regularisation and far fewer function evaluations than DEQs. We show that RevDEQs significantly improve performance on language modelling and image classification tasks against comparable implicit and explicit models.
\end{abstract}

\section{Introduction}
Explicit models, where the forward model evaluation is explicitly defined and fixed, are the dominant modelling paradigm in machine learning. However, recent work has shown that it is possible to define the forward model implicitly, where a numerical algorithm is evaluated at runtime to determine the model output. This is instantiated by Neural Differential Equations \cite{chen2018neural, gholaminejad2019anode, kidger2020neural, li2020scalable, kidger2021neural, issa2024non, kidger2022neural}, Optimisation Networks \cite{amos2017optnet, gould2021deep} and Deep Equilibrium Models (DEQs) \cite{bai2019deep, el2021implicit}.

Implicit models display interesting properties over their explicit counterparts: compute can be dynamically allocated per data example, memory usage is constant with respect to the effective model depth and oftentimes task performance is improved. Here, we focus on DEQs as the most general and performant class of implicit model with many applications: language \cite{bai2019deep}, classification and segmentation \cite{bai2020multiscale}, generative modelling \cite{bai2022deep, lu2021implicit, pokle2022deep}, inverse problems \cite{gilton2021deep}, implicit representations \cite{huang2021textrm} and graph neural networks \cite{gu2020implicit, liu2022mgnni}.

However, a number of fundamental issues remain with the formulation of DEQs. Backpropagation is performed by computing a numerical solution to the implicit function theorem \cite{bai2019deep}. The gradients obtained are therefore an approximation to the true gradient of the model being trained. This approximation causes training instability and oftentimes training diverges altogether \cite{bai2021stabilizing}. To alleviate this, many function evaluations are required to compute an accurate gradient or strong regularisations are needed to ensure the gradient system is simpler to solve \cite{bai2021stabilizing, winston2020monotone, gabor2024positive}.

Here, we introduce Reversible Deep Equilibrium Models (RevDEQs) that compute exact gradients via an algebraically reversible fixed point solver, while retaining constant memory complexity with respect to the effective model depth. As gradient calculation is exact, RevDEQs require far fewer function evaluations, no regularisation and demonstrate significantly improved performance on large-scale tasks. Code available at \url{https://github.com/sammccallum/reversible-deq}.

\section{Background}
\subsection{Deep Equilibrium Models}
A DEQ defines the model output, $\mathbf{y}$, as the fixed point of a parameterised function \cite{bai2019deep},
\begin{equation}
    \label{eq:deq}
    \begin{aligned}
    &\mathbf{z}^* = f_\theta(\mathbf{z}^*, \mathbf{x}), \\
    &\mathbf{y} := \mathbf{z}^*
    \end{aligned}
\end{equation}
where $\mathbf{z}^* \in \mathbb{R}^{d_z}$ is the fixed point, $\mathbf{x}\in \mathbb{R}^{d_x}$ is a data point and $f_\theta : \mathbb{R}^{d_z} \times \mathbb{R}^{d_x} \rightarrow \mathbb{R}^{d_z}$ is a neural network with parameters $\theta$.

The output $\mathbf{y}$ is defined implicitly by $f_\theta$. An algorithm (fixed point solver) is therefore required to compute the fixed point $\mathbf{z}^*$. The simplest fixed point solver is obtained by iterating $f_\theta$,
\begin{equation}
    \label{eq:fixed-point-iteration}
    \mathbf{z}_{n+1} = f_\theta(\mathbf{z}_n, \mathbf{x}), \qquad \mathbf{z}_0 = \vec{0},
\end{equation}
for $0\leq n \leq N$, where $N$ defines the maximum number of steps. We see that equation \eqref{eq:fixed-point-iteration} looks similar to an explicit feed-forward network, where the input is injected at each step $n$. Alternatively, we do not need to fix the maximum number of steps but can rather iterate equation \eqref{eq:fixed-point-iteration} until convergence, $\lVert \mathbf{z}_{n+1} - \mathbf{z}_n \rVert < \epsilon$ for some tolerance $\epsilon$.

In practice, stronger fixed point solvers are used to accelerate convergence. Broyden's method \cite{broyden1965class}, a quasi-Newton root finding algorithm, was the original solver of choice \cite{bai2019deep}. Recently, Anderson's method \cite{anderson1965iterative} has become popular due to the reduced computational cost and improved performance \cite{gilton2021deep}.

\subsection{Backpropagation}
One key property of implicit models is that an associated adjoint system can be derived and solved numerically to find the gradient. This removes the need to store the forward computation graph and leads to constant memory efficiency with respect to the graph depth.

For DEQs, the gradient can be derived via the Implicit Function Theorem (IFT) \cite{bai2019deep, winston2020monotone}. Consider a scalar-valued loss function $L(z^*) : \mathbb{R}^{d_z}\rightarrow \mathbb{R}$ and associated gradient (adjoint) $\mathbf{a}_{z} = \partial L/\partial \mathbf{z}$. Then,
\begin{equation}
    \frac{\partial L}{\partial \theta} = \left(\frac{\partial f_\theta(\mathbf{z}^*, \mathbf{x})}{\partial \theta}\right)^T \mathbf{g},
\end{equation}
where $\mathbf{g}$ is obtained by solving the gradient fixed-point system,
\begin{equation}
    \label{eq:adjoint-system}
    \mathbf{g} = \left(\frac{\partial f_\theta(\mathbf{z}^*, \mathbf{x})}{\partial \mathbf{z}^*}\right)^T \mathbf{g} + \mathbf{a}_z.
\end{equation}
That is, equation \eqref{eq:adjoint-system} can be solved analogously to equation \eqref{eq:deq} by solving for the fixed point $\mathbf{g}$.

While the memory efficiency of this approach is attractive, the resulting gradient is a numerical approximation. To obtain a strong approximation the fixed point tolerance, $\epsilon$, must be small and a large number of solver steps, $N$, are required.

Given the difficulty of training implicit models via the implicit function theorem, many alternative gradient approximations can be made: Neumann series \cite{lorraine2020optimizing, geng2021training}, phantom gradient \cite{geng2021training} and Jacobian-free backpropagation \cite{fung2022jfb}. Note that these methods introduce a further approximation to the exact gradient of the model but are cheaper to compute.

\subsection{Algebraic Reversibility}
Alternatively, it is possible to compute exact gradients while retaining the memory efficiency of backpropagation via the IFT. This is achieved by constructing an algebraically reversible fixed point solver, whereby the solver is able to step between $\mathbf{z}_n$ and $\mathbf{z}_{n+1}$ in closed form.

Consider the fixed point iteration in equation \eqref{eq:fixed-point-iteration}. At runtime, the iteration unrolls $N$ steps to form the forward computation graph. This discrete, finite-precision graph \emph{is} the model. The gradient of this graph corresponds to the model being trained and is therefore referred to as exact.

We could simply store this graph in memory and backpropagate using standard automatic differentiation \cite{griewank2008evaluating}. However, the memory cost of this graph scales with $N$. Instead, if our fixed point solver is algebraically reversible then the forward graph can be reconstructed exactly (up to floating point precision) during backpropagation. By reconstructing the forward graph exactly, gradient calculation becomes exact.

This is the idea we present here. In Section \ref{sec:revdeq}, we construct an algebraically reversible fixed point solver and derive the exact backpropagation algorithm. This algorithm is linear in time $O(N)$ and constant in memory $O(1)$. We then show in Section \ref{sec:experiments} that the resulting model, termed a Reversible Deep Equilibrium Model (RevDEQ), significantly outperforms previous DEQ approaches and comparable explicit architectures. 

Reversibility has been found to be a useful idea in many areas of machine learning: reversible networks apply this to residual networks \cite{gomez2017reversible, chang2018reversible}, ReFormer uses this construction in transformers \cite{kitaev2020reformer}, momentum networks present an alternative reversible residual network \cite{sander2021momentum}, coupling layers are used in generative modelling for exact likelihood calculation \cite{dinh2014nice} and image editing \cite{wallace2023edict}, reversible neural differential equation solvers improve backpropagation efficiency \cite{mccallum2024efficient, kidger2021efficient, zhuang2021mali}.

\section{Reversible Deep Equilibrium Models}
\label{sec:revdeq}

\begin{figure}
    \centering
    \includegraphics[width=0.8\linewidth]{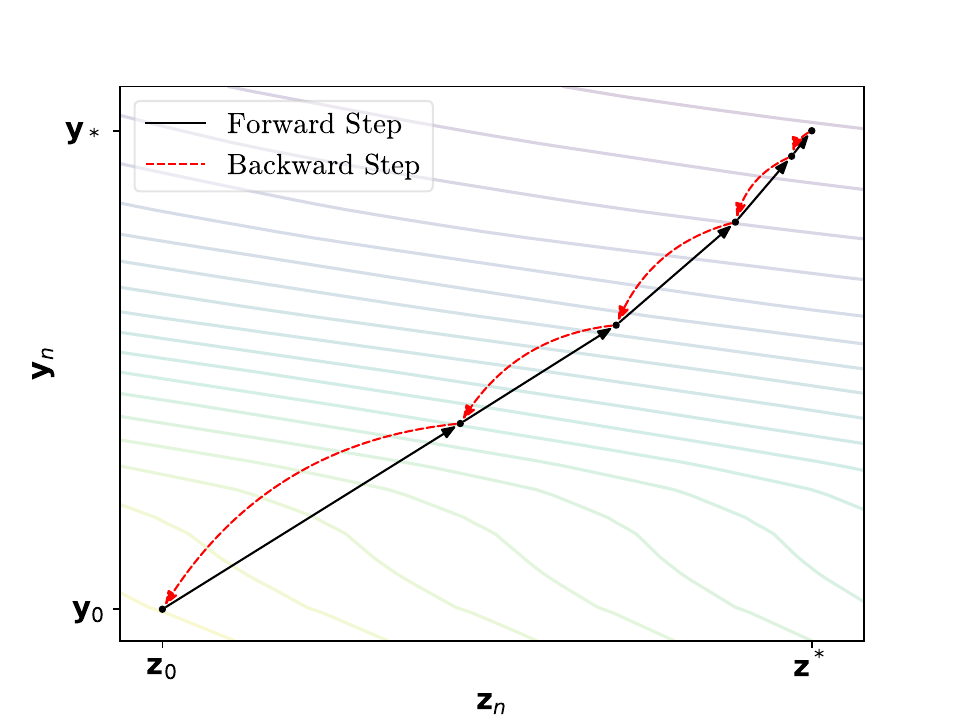}
    \caption{Example of the forward and backward passes in RevDEQ. Starting at $\{\mathbf{y}_0, \mathbf{z}_0\}$ we iterate the forward step until we reach (an approximation to) the fixed point $\{\mathbf{y}^*, \mathbf{z}^*\}$. On the backward pass we reverse each forward step until we return to the initial condition.}
    \label{fig:overview}
\end{figure}

\subsection{Forward Pass}
Consider the fixed point system in equation \eqref{eq:deq}. The forward pass of a RevDEQ is given by the following update rule,
\begin{equation}
    \label{eq:reversible-fixed-point}
    \begin{aligned}
    \mathbf{y}_{n+1} &= (1-\beta)\mathbf{y}_n + \beta f_\theta(\mathbf{z}_n, \mathbf{x}), \\ 
    \mathbf{z}_{n+1} &= (1-\beta)\mathbf{z}_n + \beta f_\theta(\mathbf{y}_{n+1}, \mathbf{x}),
    \end{aligned}
\end{equation}
where $\mathbf{y}_n, \mathbf{z}_n$ defines the coupled fixed point state with initial condition given by $\mathbf{z}_0=\mathbf{y}_0=\vec{0}$ and $0< \beta < 2$ is a relaxation parameter. Each update of equation \eqref{eq:reversible-fixed-point} performs one relaxed fixed point iteration. But, the second update evaluates the function $f_\theta$ at the updated state $\mathbf{y}_{n+1}$. This couples $\mathbf{z}_n, \mathbf{y}_n$ together and creates an algebraically reversible step.

\subsection{Backward Pass}
Since the update in equation \eqref{eq:reversible-fixed-point} is reversible, we can simply invert the multiplication and addition operations to exactly reverse one forward step. This results in the backward step, given by
\begin{equation}
    \label{eq:reversible-backward-step}
    \begin{aligned}
    \mathbf{z}_n &= \frac{\mathbf{z}_{n+1}-\beta f_\theta(\mathbf{y}_{n+1}, \mathbf{x})}{1-\beta}, \\ 
    \mathbf{y}_n &= \frac{\mathbf{y}_{n+1}-\beta f_\theta(\mathbf{z}_n, \mathbf{x})}{1-\beta}.
    \end{aligned}
\end{equation}
It is this reversibility property that allows for exact gradient calculation; the forward pass can be exactly reconstructed during backpropagation (see Figure \ref{fig:overview}). We derive the reversible backpropagation algorithm below in Section \ref{subsec:backpropagation}.

\subsection{Convergence}
Let $\mathbf{q}^*$ denote the unique fixed point of $f_\theta(\cdot, \mathbf{x})$. Then the reversible fixed point solver in equation \eqref{eq:reversible-fixed-point} converges to the fixed point $\mathbf{q}^*$ with $\mathbf{y}^*=\mathbf{z}^*=\mathbf{q}^*$. That is, both states $\mathbf{y}_n$ and $\mathbf{z}_n$ converge to the unique fixed point of $f_\theta$, as $n\rightarrow \infty$. This is illustrated by Theorem \ref{thm:reversible-fixed-point}.

\begin{restatable}{theorem}{revthm}
    \label{thm:reversible-fixed-point}
    Let $\{\mathbf{y}_n, \mathbf{z}_n\}_{n \geq 0}$ be the sequence obtained by the reversible iteration in equation \eqref{eq:reversible-fixed-point} and consider a contractive function $f: \mathbb{R}^d \rightarrow \mathbb{R}^d$ with Lipschitz constant $0 < k < 1$. Then, for all $0<\beta<2/(k+1)$ the pair $\mathbf{y}_n, \mathbf{z}_n$ both converge to the unique fixed point $\mathbf{q}^*=f(\mathbf{q}^*)$.
\end{restatable}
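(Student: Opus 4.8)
The plan is to reduce the coupled vector recursion to a two-dimensional scalar comparison system and then show that its iteration matrix has spectral radius strictly below one precisely when $0<\beta<2/(k+1)$. Since $f$ is a $k$-contraction, the Banach fixed point theorem supplies the unique $\mathbf{q}^*$ with $f(\mathbf{q}^*)=\mathbf{q}^*$; observe also that $\mathbf{q}^*=(1-\beta)\mathbf{q}^*+\beta f(\mathbf{q}^*)$, so $\mathbf{q}^*$ is simultaneously a fixed point of the reversible update \eqref{eq:reversible-fixed-point}. Setting $a_n=\norm{\mathbf{y}_n-\mathbf{q}^*}$ and $b_n=\norm{\mathbf{z}_n-\mathbf{q}^*}$, I would subtract $\mathbf{q}^*$ from both lines of \eqref{eq:reversible-fixed-point}, apply the triangle inequality and the Lipschitz bound $\norm{f(u)-f(v)}\le k\norm{u-v}$, and then feed the resulting estimate for $a_{n+1}$ into the estimate for $b_{n+1}$ (this substitution is what makes the coupling visible). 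This yields the componentwise inequality
\begin{equation*}
\begin{pmatrix} a_{n+1} \\ b_{n+1} \end{pmatrix} \le M \begin{pmatrix} a_n \\ b_n \end{pmatrix}, \qquad M = \begin{pmatrix} |1-\beta| & \beta k \\ \beta k\,|1-\beta| & |1-\beta| + \beta^2 k^2 \end{pmatrix}.
\end{equation*}

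Because every entry of $M$ is nonnegative, this comparison inequality is preserved under iteration, so $(a_n,b_n)^\top \le M^n (a_0,b_0)^\top$ componentwise for all $n$; since $a_n,b_n\ge 0$, it therefore suffices to prove $\rho(M)<1$, which forces $M^n\to 0$ and hence $a_n,b_n\to 0$. A short computation gives $\operatorname{tr} M = 2|1-\beta| + \beta^2 k^2$ and, after the cross terms cancel, $\det M = |1-\beta|^2$. The eigenvalues of $M$ are the roots of $\lambda^2 - (\operatorname{tr} M)\lambda + \det M = 0$, and the Schur--Cohn (Jury) criterion for a real quadratic to have both roots in the open unit disc reduces here to the two conditions $\det M < 1$ and $1 - \operatorname{tr} M + \det M > 0$ (the remaining condition $1 + \operatorname{tr} M + \det M > 0$ is automatic since $\operatorname{tr} M,\det M\ge 0$). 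These become $|1-\beta|<1$ and $(1-|1-\beta|)^2 > \beta^2 k^2$, the latter equivalent to $1-|1-\beta| > \beta k$ because $|1-\beta|<1$.

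To finish I would split on the sign of $1-\beta$. For $0<\beta\le 1$ the inequality $1-|1-\beta| > \beta k$ reads $\beta > \beta k$, which holds since $k<1$ (and $|1-\beta|<1$ is clear). For $1<\beta<2$ it reads $2-\beta > \beta k$, i.e. $\beta < 2/(k+1)$, which is exactly the hypothesis and also implies $|1-\beta|<1$. Hence $\rho(M)<1$ throughout $0<\beta<2/(k+1)$, giving $a_n,b_n\to 0$ and therefore $\mathbf{y}_n,\mathbf{z}_n\to\mathbf{q}^*$. I expect the main obstacle to be bookkeeping rather than anything conceptually deep: assembling $M$ correctly through the $a_{n+1}$-into-$b_{n+1}$ substitution, and handling $|1-\beta|$ carefully so that the eigenvalue condition lines up exactly with the stated range of $\beta$; a minor technical point is justifying that componentwise vector inequalities propagate under multiplication by the nonnegative matrix $M$.
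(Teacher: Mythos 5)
Your proof is correct, and it arrives at the same threshold $\beta<2/(k+1)$ by a route that genuinely differs from the paper's in its second half. The opening estimates coincide: the paper also subtracts, applies the triangle inequality and the Lipschitz bound, and substitutes the bound on the first component into the second; its two bounds $T_1\le L_1(\cdot)$ and $T_2\le L_2(\cdot)$, with $L_1=\lvert 1-\beta\rvert+\beta k$ and $L_2=\lvert 1-\beta\rvert+\beta k\lvert 1-\beta\rvert+\beta^2k^2$, are exactly the row sums of your matrix $M$. The divergence is in what is done with these rows. The paper packages them as contractivity of the coupled map $\varphi(\by,\bz)$ in the max norm on $\bRd\times\bRd$ --- in effect bounding $\rho(M)$ by the induced $\infty$-norm $\max\{L_1,L_2\}$ --- then invokes Banach to obtain a unique fixed point of $\varphi$ and needs a separate lemma (via $f\circ f$) to identify that fixed point with $(\bq^*,\bq^*)$. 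You instead verify directly that $(\bq^*,\bq^*)$ is a fixed point of the scheme and compute $\rho(M)$ exactly from $\operatorname{tr}M=2\lvert 1-\beta\rvert+\beta^2k^2$ and $\det M=\lvert 1-\beta\rvert^2$ (the cross terms do cancel) together with the Jury conditions; your case split on the sign of $1-\beta$ lands precisely on $0<\beta<2/(k+1)$. What each buys: the paper's norm argument is more elementary and self-contained, and it directly produces the explicit linear rate $L=\lvert 1-\beta\rvert+\beta k$ quoted in the main text; your spectral-radius argument sidesteps the uniqueness-and-identification detour entirely and is in fact sharper, since one can check that $\rho(M)<L_1=\max\{L_1,L_2\}$ whenever $L_1<1$ and $\beta k>0$, so it yields a strictly better asymptotic convergence rate. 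The one step to spell out in a final write-up is the induction showing that componentwise inequalities are preserved under multiplication by the entrywise-nonnegative matrix $M$; it is routine, but it is the hinge on which the comparison argument turns.
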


\begin{proof}
    The proof of Theorem \ref{thm:reversible-fixed-point} is given in Appendix \ref{app:convergence}.
\end{proof}

The reversible scheme converges linearly in the number of steps $N$ with constant $L = |1-\beta| + \beta k$, such that $\lVert \mathbf{y}_N - \mathbf{q}^*\rVert \leq L^N \lVert \mathbf{y}_0 - \mathbf{q}^*\rVert$. The reversible scheme therefore shares the same convergence speed as the single-step relaxed fixed point iteration. This is understood to result from the reversible convergence speed being dominated by the first state updated, $\mathbf{y}_n$, which follows the relaxed update rule.

\begin{minipage}[t]{0.48\textwidth}
    \subsection{Backpropagation}
    \label{subsec:backpropagation}
    \vspace{1ex}
        
    The reversible backpropagation algorithm proceeds by first reconstructing the state $\{\mathbf{y}_n, \mathbf{z}_n\}$ from $\{\mathbf{y}_{n+1}, \mathbf{z}_{n+1}\}$ using the backward step in equation \eqref{eq:reversible-backward-step}. Then, the gradients are backpropagated by following the reverse-mode automatic differentiation rules \cite{griewank2008evaluating} for the forward step in equation \eqref{eq:reversible-fixed-point}. See Algorithm \ref{alg:backprop-fixed-point}.

    \vspace{1ex}
        
    We define a scalar-valued loss function $L(\mathbf{z}_N) : \mathbb{R}^d \rightarrow \mathbb{R}$ on the terminal value of the solve $\mathbf{z}_N \approx \mathbf{z^*}$. The gradients (adjoints) are defined as $\bar{\mathbf{v}}=\partial L(\mathbf{z}_N)/\partial \mathbf{v}$. The presence of an adjoint to the left of a Jacobian, $\bar{\mathbf{v}} \frac{\partial f_\theta}{\partial \mathbf{z}}$, indicates a vector-Jacobian product. This can be efficiently computed using automatic differentiation \cite{griewank2008evaluating, griewank1992achieving}.

    \vspace{1ex}
        
    Algorithm \ref{alg:backprop-fixed-point} implements the same backpropagation rules as would be taken by reverse-mode automatic differentiation with the forward graph stored in memory. However, instead of storing the forward graph we can reconstruct it exactly from the terminal state $\{\mathbf{y}_N, \mathbf{z}_N\}$. The algorithm therefore has constant memory complexity $O(1)$ and linear time complexity $O(N)$ with respect to the depth of the graph $N$.
\end{minipage}
\hfill
\begin{minipage}[t]{0.48\textwidth}
\begin{algorithm}[H]
    \captionof{algorithm}{Reversible Backpropagation}\label{alg:backprop-fixed-point}
    \begin{algorithmic}
        \State {\bfseries Input:} $\mathbf{y}_{n+1}, \mathbf{z}_{n+1}, \bar{\mathbf{y}}_{n+1}, \bar{\mathbf{z}}_{n+1}, \bar{\theta}$ 
        \vspace{1ex}
        \State \# Backward step
        \vspace{1ex}
        \State $\mathbf{z}_n = \displaystyle\frac{\mathbf{z}_{n+1} - \beta f_\theta(\mathbf{y}_{n+1}, \mathbf{x})}{1-\beta}$
        \vspace{1ex}
        \State $\mathbf{y}_n = \displaystyle\frac{\mathbf{y}_{n+1} - \beta f_\theta(\mathbf{z}_n, \mathbf{x})}{1-\beta}$
        \vspace{2ex}
        \State \# Gradients
        \State $\bar{\mathbf{y}}_{n+1} \leftarrow \bar{\mathbf{y}}_{n+1} + \beta\bar{\mathbf{z}}_{n+1}\displaystyle\frac{\partial f_\theta(\mathbf{y}_{n+1}, \mathbf{x})}{\partial \mathbf{y}_{n+1}}$
        \vspace{1ex}
        \State $\bar{\mathbf{y}}_n = (1-\beta)\bar{\mathbf{y}}_{n+1}$
        \vspace{1ex}
        \State $\bar{\mathbf{z}}_n = (1-\beta)\bar{\mathbf{z}}_{n+1} + \beta \bar{\mathbf{y}}_{n+1}\displaystyle\frac{\partial f_\theta(\mathbf{z}_n, \mathbf{x})}{\partial \mathbf{z}_n}$
        \vspace{1ex}
        \State $\begin{aligned}\bar{\theta} = \bar{\theta} &+ \beta \bar{\mathbf{z}}_{n+1}\displaystyle\frac{\partial f_\theta(\mathbf{y}_{n+1}, \mathbf{x})}{\partial \theta} \\ &+ \beta \bar{\mathbf{y}}_{n+1}\displaystyle\frac{\partial f_\theta(\mathbf{z}_n, \mathbf{x})}{\partial \theta}\end{aligned}$
        \vspace{1ex}
        \State {\bfseries Return:} $\mathbf{y}_n, \mathbf{z}_n, \bar{\mathbf{y}}_n, \bar{\mathbf{z}}_n, \bar{\theta}$
    \end{algorithmic}
\end{algorithm}
\end{minipage}

\section{Experiments}
\label{sec:experiments}
We demonstrate the performance of RevDEQ on language modelling and image classification tasks. 

For language modelling, we use the Wikitext-103 dataset containing over 100 million tokens from the set of verified good and featured articles on Wikipedia \cite{merity2016pointer}. We compare performance to previous DEQ models \cite{bai2019deep, bai2021stabilizing} and an explicit transformer architecture via Transformer-XL \cite{dai2019transformerxl}.

For image classification, we use the CIFAR-10 dataset \cite{krizhevsky2009learning}. We introduce a novel multi-scale implicit architecture, inspired by the explicit ResNet model \cite{he2016deep}, where the traditional deep unrolled convolutional layers at each image scale are replaced by a single RevDEQ block. This model is significantly simpler than previous multi-scale DEQ approaches \cite{bai2020multiscale, winston2020monotone} and achieves higher classification accuracy. We compare performance to previous single-scale and multi-scale implicit models, and explicit ResNet architectures.

\subsection{Language Modelling on Wikitext-103}
\label{subsec:wikitext}
The original DEQ model was applied to model sequential data, such as language \cite{bai2019deep}. This was motivated by the observation that the hidden layers of explicit deep architectures naturally learn contractive functions and push the hidden state towards a fixed point. 

We evaluate the performance of the RevDEQ model on the Wikitext-103 dataset \cite{merity2016pointer}. The model is parameterised with a decoder-only transformer architecture \cite{vaswani2017attention}, but with the many explicit layers replaced by a single layer, $f_\theta$, of which we find the fixed point. 

\paragraph{Architecture} Given a sequence of (positional plus token) embeddings $\mathbf{x}_{1:T}$, with sequence length $T$ and each element $\mathbf{x}_i\in \mathbb{R}^d$, the equilibrium module $f_\theta$ is given by

\begin{equation}
    \label{eq:transformer}
    \begin{aligned}
        \hat{\mathbf{z}}_{1:T} &= \mathbf{z}_{1:T} + \mathbf{x}_{1:T}, \\
        f_\theta(\mathbf{z}_{1:T}, \mathbf{x}_{1:T}) &= \text{LN} \circ \phi \circ \text{LN} \circ \text{SelfAttention}(W_{QKV}\hat{\mathbf{z}}_{1:T}).
    \end{aligned}
\end{equation}

First the input sequence is added to the hidden state $\mathbf{z}_{1:T}$, then we apply a self-attention operation with $W_{QKV}\in \mathbb{R}^{3d \times d}$ \cite{vaswani2017attention} followed by layer normalisation (LN) \cite{ba2016layer}, 2-layer MLP $\phi$, and a final LN. The self-attention operation is multi-headed, with $8$ heads used for the experiments below.

The fixed point of $f_\theta$ is calculated using the reversible fixed point solver in equation \eqref{eq:reversible-fixed-point}. Gradient calculation is performed via the reversible backpropagation algorithm, given by Algorithm \ref{alg:backprop-fixed-point}. We set the fixed point tolerance to be $10^{-3}$ and choose a maximum number of solver steps. See Appendix \ref{app:experiments} for full details.

We compare RevDEQ performance to comparable implicit and explicit models in Table \ref{tab:perplexity-wikitext}. This includes the Transformer-XL explicit architecture \cite{dai2019transformerxl}, the original DEQ-Transformer \cite{bai2019deep} and the DEQ-Transformer with Jacobian regularisation (JR) \cite{bai2021stabilizing}. 

\begin{table}[b]
    \centering
    \renewcommand{\arraystretch}{1.3}
    \begin{tabular}{c c c c}
         \toprule
         Model & Parameters & Function Evaluations & Perplexity \\
         \midrule
         Transformer-XL (18 layers) \cite{dai2019transformerxl} & 110M & - & 24.1 \\
         DEQ-Transformer \cite{bai2019deep} & 98M & 30 & 24.0 \\
         DEQ-Transformer + JR \cite{bai2021stabilizing} & 98M & 14 & 24.5 \\
         DEQ-Transformer + JR \cite{bai2021stabilizing} & 98M & 12 & 24.9 \\
         \midrule
         \textbf{RevDEQ} & 94M & \textbf{8} & 23.4 \\
         \textbf{RevDEQ} & 94M & 12 & \textbf{23.2} \\
         \bottomrule
    \end{tabular}
    \vspace{1ex}
    \caption{Test perplexity on the Wikitext-103 dataset. RevDEQ models outperform previous DEQ approaches while requiring fewer function evaluations. The RevDEQ model also outperforms explicit transformer architectures of comparable size.}
    \label{tab:perplexity-wikitext}
\end{table}

\vspace{2ex}

\paragraph{Comparison to implicit models} From Table \ref{tab:perplexity-wikitext} we see that the RevDEQ model significantly outperforms the original DEQ model while using $3.75\times$ fewer function evaluations. The larger number of function evaluations of the DEQ model results from the difficulty in computing the gradient via the implicit function theorem. The RevDEQ model avoids this difficulty and computes an exact gradient regardless of the number of function evaluations. 

We hypothesise that the improved performance is largely a function of exact gradient calculation. To test this, we vary the number of function evaluations in the RevDEQ model and calculate test perplexity. This is shown in Table \ref{tab:nfe-wikitext}.

It can be seen that the RevDEQ model is competitive with previous DEQ models when using only 6 function evaluations. Additionally, the perplexity quickly decreases as the number of function evaluations increase and plateaus after approximately 12 evaluations. This scaling is much faster than previous implicit models, where typically 10s or 100s of evaluations are required to reach optimality. 

\paragraph{Comparison to explicit models} Furthermore, we find that the RevDEQ model outperforms explicit transformer architectures for comparable model sizes. Previous DEQ models were only able to approximately match performance. Here, we show that implicit architectures can indeed surpass explicit model performance in language modelling tasks.

This result can be understood by considering the following argument. The explicit transformer architecture distributes the learnable parameters over $N$ layers. But it is known that an $N$ layer deep network can be represented by a single weight-tied layer that is iterated for (at most) $N$ steps, where the single hidden state is formed by the concatenation of the hidden states over the $N$ layers (see Theorem 3 of \cite{bai2019deep}). 

This reasoning is instantiated by DEQs, where the parameter count of the single layer $f_\theta$ is increased to match the parameter count of the $N$ deep layers. However, we are now free to extend the number of steps taken by the model over $N$. This effectively increases the depth of the model and therefore can result in improved performance, as is observed here.

\begin{table}
    \centering
    \renewcommand{\arraystretch}{1.3}
    \begin{tabular}{c c c}
         \toprule
         Model & Function Evaluations & Perplexity \\
         \midrule
         DEQ-Transformer \cite{bai2019deep} & 30 & 24.0 \\
         DEQ-Transformer (JR) \cite{bai2021stabilizing} & 12 & 24.9 \\
         \midrule
         \multirow{5}{*}{\textbf{RevDEQ}} & \textbf{2} & 27.3 \\
         & 4 & 24.6 \\
         & 6 & 23.7 \\
         & 8 & 23.4 \\
         & 10 & 23.3 \\
         & 12 & \textbf{23.2} \\
         \bottomrule
    \end{tabular}
    \vspace{1ex}
    \caption{Test perplexity on the Wikitext-103 dataset in terms of the number of function evaluations. RevDEQ models are more compute efficient in comparison to previous DEQs.}
    \label{tab:nfe-wikitext}
\end{table}

\subsection{Image Classification on CIFAR-10}
Next we consider the application of the RevDEQ model to image classification on CIFAR-10. We compare to previous single-scale (one image resolution) and multi-scale (many image resolutions) implicit and explicit models.

Previous multi-scale implicit architectures have focused on concatenating (width-wise) all scales into one feature vector where each scale is driven to equilibrium in parallel \cite{bai2020multiscale, winston2020monotone}. Here, we instead choose to model each image scale with an implicit architecture and parameterise explicit downsampling layers between scales. This is a ResNet model \cite{he2016deep}, but with the many deep unrolled convolutional blocks at each image scale replaced by a single convolutional block of which we find the fixed point.

This parameterisation is much simpler than previous multi-scale architectures \cite{bai2020multiscale, winston2020monotone}; additionally, we find the model to be more parameter efficient and performant, even in comparison to explicit ResNet models \cite{he2016deep}.

\paragraph{Architecture} The single-scale residual block, $f_\theta(\mathbf{z}, \mathbf{x})$ is defined by
\begin{equation}
    \label{eq:single-scale-deq}
    \begin{aligned}
    \hat{\mathbf{z}} &= \text{ReLU} \circ \text{norm}(W_1 * \mathbf{z}), \\
    \tilde{\mathbf{z}} &= \text{ReLU} \circ \text{norm}(\mathbf{x} + W_2 * \hat{\mathbf{z}}), \\
    f_\theta(\mathbf{z}, \mathbf{x}) &= \text{norm}(\mathbf{z} + \tilde{\mathbf{z}}),
    \end{aligned}
\end{equation}
where $W_1, W_2$ are weight matrices, $*$ denotes a convolution operation, $\text{norm}$ stands for normalisation (group normalisation is used here) and $\text{ReLU}$ indicates the rectified linear unit activation function. The application of the reversible fixed point algorithm to equation \eqref{eq:single-scale-deq} defines the output - we refer to this as a RevDEQ block. 

\noindent
\begin{minipage}[t]{0.55\textwidth}
    \paragraph{Single-scale} For the single-scale model we simply encode the input image, compute the RevDEQ block and apply the final linear classification layers. See Appendix \ref{app:experiments} for full details.

    \vspace{2ex}
    
    \paragraph{Multi-scale} For the multi-scale model, we first encode the input image, then apply a RevDEQ block and downsample the output. At the next feature scale we again apply a RevDEQ block and downsample. See Figure \ref{fig:implicit-resnet-diagram} for an illustration. 

    \vspace{1ex}
    
    This is iterated four times, defining four feature scales as is common in ResNet models \cite{he2016deep}. At each scale, $i$, the number of channels $C_i$ is selected along with the fixed point tolerance $\epsilon_i$ and maximum steps $N_i$. The number of steps $N_i$ defines the effective depth of the model at each scale.
    
    \vspace{1ex}

    The downsampling layers follow the design of \cite{he2016identity}, with the image dimension $(H_i, W_i)$ halved by a strided convolution operation. The residual connection follows a 1x1 convolution to align the channel dimensions $C_i \rightarrow C_{i+1}$. Finally, we apply the linear classification layers. See Appendix \ref{app:experiments} for full details.

\end{minipage}
\hfill
\begin{minipage}[t]{0.4\textwidth}
    \centering
    \begin{tikzpicture}[>=stealth,thick, baseline=(current bounding box.north)]
        \node (x) {$\mathbf{x}_i$};
        \node[draw, minimum width=2.0cm, minimum height=0.8cm, below=8mm of x, align=center] (revdeq) {RevDEQ \\ $(C_i, H_i, W_i)$};
        \node[circle, draw, inner sep=1pt, below=5mm of revdeq] (add1) {+};
        \node[draw, minimum width=2cm, minimum height=0.8cm, below=8mm of add1, align=center] (down) {Downsample \\ $C_i\rightarrow C_{i+1}$};
        \node[circle, draw, inner sep=1pt, below=5mm of down] (add2) {+};
        \node[below=0.5cm of add2] (output) {$\mathbf{x}_{i+1}$};
        
        \draw[->] (x) -- (revdeq);
        \coordinate (midarrow1) at ($ (x)!0.3!(revdeq) $);
        \coordinate (midarrow2) at ($ (add1)!0.3!(down) $);
        \draw[->] (revdeq) -- (add1);
        \draw[->] (add1) -- (down);
        \draw[->] (down) -- (add2);
        \draw[->] (midarrow1.east) to[out=0,in=0, looseness=2.5] (add1.east);
        \draw[->, dashed] (midarrow2.east) to[out=0,in=0, looseness=2.5] (add2.east);
        \draw[->] (add2) -- (output);
    \end{tikzpicture}
    
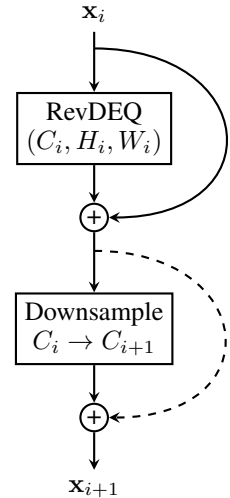
\captionof{figure}{A single scale of the multi-scale implicit ResNet architecture.}
    \label{fig:implicit-resnet-diagram}
\end{minipage}

\paragraph{Results} The results are shown in Table \ref{tab:cifar10}. For the single-scale architectures, the RevDEQ model outperforms previous implicit architectures while using fewer parameters and function evaluations.

The multi-scale RevDEQ model (small, 170K parameters), achieves a higher classification accuracy than all previous multi-scale architectures. Compared to the multi-scale DEQ (MDEQ) with the same number of parameters, the RevDEQ model uses $3\times$ fewer functions evaluations.

The medium size multi-scale models is the most interesting category. The RevDEQ model (5M parameters) obtains a higher classification accuracy than the explicit ResNet-18 model (10M parameters) and matches classification accuracy of the explicit ResNet-101 model (40M parameters). This result can be understood by the argument presented in Section \ref{subsec:wikitext} (comparison to explicit models).

Furthermore, the RevDEQ model matches performance of the MDEQ architecture with half the parameter count and $3\times$ fewer functions evaluations. For the same model size, the RevDEQ model achieves a higher classification accuracy and still maintains a significant reduction in function evaluations ($1.9\times$). 

Overall, these results indicate that the RevDEQ model can be used as a modular block that replaces explicit unrolled deep layers in ResNet type models; higher classification accuracy is obtained with the implicit-depth block while improving on parameter efficiency. This drop-in replacement is significantly simpler than previous tailor-made implicit vision architectures.

\begin{table}[t]
    \centering
    \renewcommand{\arraystretch}{1.3}
    \begin{tabular}{c c c c}
         \toprule
         Model & Parameters & Function Evaluations & Accuracy (\%) \\
         \midrule
         \multicolumn{4}{c}{\emph{Single-scale}} \\
         DEQ \cite{bai2019deep} & 170K & 15 & $82.2\pm 0.3$ \\
         monDEQ \cite{winston2020monotone} & 854K & 20 & $82.0 \pm 0.3$ \\
         \textbf{RevDEQ} & \textbf{170K} & \textbf{8} & $\mathbf{87.5 \pm 0.6}$ \\
         \midrule
         \multicolumn{4}{c}{\emph{Multi-scale (small)}} \\
         MDEQ \cite{bai2020multiscale} & 170K & 15 & $87.1 \pm 0.4$ \\
         monDEQ \cite{winston2020monotone} & 1M & 20 & $89.0 \pm 0.3$ \\
         pcDEQ \cite{gabor2024positive} & 661K & 14 & $89.2 \pm 0.0$ \\
         \textbf{RevDEQ} & \textbf{170K} & $\mathbf{5}$ & $\mathbf{89.6 \pm 0.3}$ \\
         \midrule
         \multicolumn{4}{c}{\emph{Multi-scale (medium)}} \\
         ResNet-18 \cite{he2016deep} & 10M & - & $92.9 \pm 0.2$ \\
         ResNet-101 \cite{he2016deep} & 40M & - & $93.8 \pm 0.3$ \\
         MDEQ \cite{bai2020multiscale} & 10M & 15 & $93.8 \pm 0.2$ \\
         MDEQ (JR) \cite{bai2021stabilizing} & 10M & 6 & $93.1 \pm 0.3$ \\
         \textbf{RevDEQ} & \textbf{5M} & \textbf{5} & $\mathbf{93.8 \pm 0.1}$ \\
         \textbf{RevDEQ} & \textbf{10M} & \textbf{8} & $\mathbf{94.4 \pm 0.1}$ \\
         \bottomrule 
    \end{tabular}
    \vspace{1ex}
    \caption{Test accuracy on the CIFAR-10 dataset. RevDEQ outperforms comparable implicit and explicit models, across a range of architectures and model sizes, while using fewer function evaluations. For multi-scale models, the number of function evaluations is taken as the mean across scales.}
    \label{tab:cifar10}
\end{table}
 
\section{Discussion}
\subsection{Performance improvements}
\paragraph{Mixed precision} The reconstruction accuracy obtained when reversing an addition operation in floating point arithmetic is limited by the numerical precision of the operands. This is due to information loss in the mantissa representation after a bit shift to align the two operand exponents. We therefore found it to be beneficial to use a mixed-precision strategy, where the addition/subtraction operations of the reversible fixed point solver are performed in 64-bit precision and all other operations are in 32-bit precision (or lower).

\paragraph{Choice of $\beta$} In the reversible fixed point scheme, the backward step features a division by $1-\beta$. This amplifies the (finite-precision) error in the subtraction step of the algorithm. Reducing $\beta$ can therefore improve gradient accuracy at the cost of convergence speed (increased damping). We found that values $\beta \in [0.5, 0.9]$ worked well for all tasks considered.

\subsection{Limitations}
\paragraph{Speed of computation} The RevDEQ model significantly improves on runtime in comparison to previous implicit models due to the reduced number of function evaluations required. However, the runtime can still be greater than comparable explicit models with fewer effective function evaluations. We believe this to be largely an implementation issue as implicit models have the potential to be more GPU efficient than their explicit counterparts (see Section \ref{subsec:future-work}).

\paragraph{Stateful normalisation} The normalisations used inside the equilibrium function $f_\theta$ are stateless layer-wise. This is due to the number of layers in the model being defined implicitly, making it challenging to collect population statistics for each `layer'. In the implicit ResNet model, to obtain stateful normalisation over the hidden state we use batch normalisation \cite{ioffe2015batch} in the downsampling blocks prior to a RevDEQ block.

\subsection{Future work}
\label{subsec:future-work}
\paragraph{Applications} There are many applications of implicit architectures in a diverse set of tasks, such as graph neural networks \cite{gu2020implicit}, flow \cite{lu2021implicit, bai2022deep} and diffusion models \cite{pokle2022deep}, implicit neural representations \cite{huang2021textrm} and inverse problems \cite{gilton2021deep}. The RevDEQ model is yet to be applied to these problems!

\paragraph{GPU efficiency} The arithmetic intensity of modern GPUs is typically $\sim 10^2$, meaning that if less than $10^2$ FLOPs are performed per byte of data then algorithm speed is limited by memory operations \cite{scaling-book}. In standard backpropagation, many read/write operations are required to store and load the forward pass of the network \cite{griewank2008evaluating, dao2022flashattention}. For implicit models, this is not required - completely removing these memory operations. By careful algorithm design, this could be exploited to improve the GPU efficiency of implicit models.

\section{Conclusion}
We have introduced a new class of reversible implicit model, RevDEQs. The model uses a reversible fixed point algorithm to compute the forward pass and performs exact gradient backpropagation in constant memory with respect to model depth. This construction demonstrates improved performance while significantly reducing the large number of function evaluations previously required by implicit architectures.

\section*{Acknowledgements}
SM and KA were supported by the EPSRC grant EP/S022945/1. JF was supported by the Department of Mathematical Sciences at the University of Bath.

\newpage
\bibliographystyle{unsrt}
\bibliography{bibliography}

\newpage
\section*{Appendix}
\appendix

\section{Convergence}
\label{app:convergence}

\subsection{Notation and definitions}

Let $\norm{\cdot}$ denote an arbitrary norm on $\bRd$. We then define $\norm{\cdot}_\infty$ as the $\norm{\cdot}$-induced max norm on $\bRd \times \bRd$, that is, 
$$
\norm{(\bx, \by)}_{\infty} \coloneqq \max \left \{ \norm{\bx}, \norm{\by}\right \}, \quad (\bx,\by) \in \bRd \times \bRd.
$$

Recall the reversible fixed point iteration \eqref{eq:reversible-fixed-point}:
\begin{align*}
\mathbf{y}_{n+1} &= (1-\beta)\mathbf{y}_n + \beta f_\theta(\mathbf{z}_n, \mathbf{x}), \\ 
\mathbf{z}_{n+1} &= (1-\beta)\mathbf{z}_n + \beta f_\theta(\mathbf{y}_{n+1}, \mathbf{x}).
\end{align*}

Without loss of generality, we can drop the $\theta$- and $\mathbf{x}$-dependence of $f_\theta(\cdot, \mathbf{x})$ and consider $f : \bRd \rightarrow \bRd$. We then define a map $\varphi$ on $\bRd \times \bRd$ via
$$
\varphi(\by, \bz) \coloneqq \left((1-\beta)\by + \beta f(\bz), (1-\beta)\bz + \beta f((1-\beta)\by + \beta f(\bz)) \right), \quad (\by, \bz) \in \bRd \times \bRd.
$$
Observe that $\varphi$ is the map associated to the reversible fixed point iteration, that is, 
$$
(\by_{n+1}, \bz_{n+1}) = \varphi(\by_n, \bz_n).
$$

In the following, we will make use of the Banach fixed point theorem. We recall it here for completeness.

\begin{theorem*}[Banach fixed point theorem]
Let $(X, d)$ be a non-empty complete metric space and assume $T: X \rightarrow X$ is a contraction. That is, there exists $k \in [0, 1)$ such that 
$$
d ( T\bx, T\by) \leq k d(\bx, \by),
$$
for all $\bx, \by \in X$. Then $T$ has a unique fixed point $\bx^* \in X$ (i.e. $\bx^* = T \bx^*$). Moreover, the iteration $\bx_{n+1} = T \bx_n$ converges to $\bx^*$ for any $\bx_0 \in X$.
\end{theorem*}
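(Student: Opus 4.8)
The plan is to prove existence constructively, by exhibiting the fixed point as the limit of the Picard iterates, and then to dispatch uniqueness separately using only the contraction inequality. I would fix an arbitrary $\bx_0 \in X$ and define $\bx_{n+1} = T\bx_n$; the entire argument then rests on controlling the successive distances $d(\bx_{n+1}, \bx_n)$.

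The first step is the geometric decay estimate $d(\bx_{n+1}, \bx_n) \leq k^n\, d(\bx_1, \bx_0)$, proved by induction: the contraction property gives $d(\bx_{n+1}, \bx_n) = d(T\bx_n, T\bx_{n-1}) \leq k\, d(\bx_n, \bx_{n-1})$, and the inductive hypothesis closes the loop. The second step turns this into a Cauchy estimate via the triangle inequality and summation of a geometric series: for $m > n$,
$$
d(\bx_m, \bx_n) \leq \sum_{j=n}^{m-1} d(\bx_{j+1}, \bx_j) \leq d(\bx_1, \bx_0) \sum_{j=n}^{\infty} k^j = \frac{k^n}{1-k}\, d(\bx_1, \bx_0).
$$
Because $0 \leq k < 1$, the prefactor $k^n/(1-k)$ tends to $0$, so $\{\bx_n\}$ is Cauchy. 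Here is where completeness of $(X,d)$ is essential: it guarantees a limit $\bx^* \in X$.

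To verify $\bx^*$ is a fixed point, I would use that a contraction is (Lipschitz) continuous, so $T\bx^* = T(\lim_n \bx_n) = \lim_n T\bx_n = \lim_n \bx_{n+1} = \bx^*$; equivalently one bounds $d(\bx^*, T\bx^*) \leq d(\bx^*, \bx_{n+1}) + k\, d(\bx_n, \bx^*)$ and sends $n \to \infty$. Uniqueness is then immediate: if $\bx^*$ and $\by^*$ both satisfy $T\bx^* = \bx^*$ and $T\by^* = \by^*$, then $d(\bx^*, \by^*) = d(T\bx^*, T\by^*) \leq k\, d(\bx^*, \by^*)$, and since $1-k > 0$ this forces $d(\bx^*, \by^*) = 0$. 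Finally, since $\bx_0$ was arbitrary and the limit of every such orbit must coincide with the unique fixed point, the iteration converges to $\bx^*$ from any starting point.

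The proof is classical and carries no deep obstacle; the one step demanding genuine care is the Cauchy estimate, where the point is to recognise that bounding consecutive terms alone is insufficient, and that summing the geometric tail $\sum_{j \geq n} k^j$ is precisely what converts geometric decay into the Cauchy property. Completeness is the sole hypothesis that cannot be circumvented — without it the Cauchy sequence need not converge within $X$ — so I would be careful to flag exactly where it enters the argument.
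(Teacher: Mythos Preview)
Your argument is the standard, correct proof of the Banach fixed point theorem. Note, however, that the paper does not actually prove this statement: it is merely recalled in the appendix ``for completeness'' and then invoked as a black box in the proofs of Lemma~\ref{lem:fixed-point} and Lemma~\ref{lem:reversible-contractivity}, so there is no paper proof to compare against.
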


\subsection{Convergence analysis}

In this section, we prove Theorem \ref{thm:reversible-fixed-point}, and derive the rate and speed of convergence of the reversible fixed point iteration. We first recall the theorem:

\revthm*

\begin{proof}
    Directly follows from Lemma \ref{lem:fixed-point} and Lemma \ref{lem:reversible-contractivity}, which are stated and proved below.
\end{proof}

\begin{lemma}\label{lem:fixed-point}
    Suppose that $f$ is a contraction with contraction constant $k \in [0, 1)$ and fixed point $\bq^*$. Additionally, assume that $\varphi$ admits a unique fixed point $(\by^*, \bz^*)$. Then $\by^*  =\bz^* = \bq^*$.
\end{lemma}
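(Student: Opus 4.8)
\textbf{Proof proposal for Lemma \ref{lem:fixed-point}.} The plan is to exhibit $(\bq^*, \bq^*)$ as a fixed point of $\varphi$ and then invoke the assumed uniqueness of the fixed point of $\varphi$ to conclude $(\by^*, \bz^*) = (\bq^*, \bq^*)$. Since $f$ is a contraction on the complete metric space $\bRd$, the Banach fixed point theorem already guarantees that its fixed point $\bq^*$ exists and is unique, so $\bq^*$ is well defined; this is the only external input needed.

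First I would substitute $(\by, \bz) = (\bq^*, \bq^*)$ into the definition of $\varphi$. The first coordinate becomes $(1-\beta)\bq^* + \beta f(\bq^*) = (1-\beta)\bq^* + \beta \bq^* = \bq^*$, using $f(\bq^*) = \bq^*$. For the second coordinate, note that the inner expression $(1-\beta)\bq^* + \beta f(\bq^*)$ is exactly the first coordinate just computed, namely $\bq^*$; hence the second coordinate is $(1-\beta)\bq^* + \beta f(\bq^*) = \bq^*$ as well. Therefore $\varphi(\bq^*, \bq^*) = (\bq^*, \bq^*)$, i.e. $(\bq^*,\bq^*)$ is a fixed point of $\varphi$.

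Finally, by hypothesis $\varphi$ has a unique fixed point $(\by^*, \bz^*)$. Comparing with the fixed point $(\bq^*, \bq^*)$ just found forces $(\by^*, \bz^*) = (\bq^*, \bq^*)$, and reading off coordinates gives $\by^* = \bz^* = \bq^*$, as claimed.

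There is essentially no obstacle here: the argument is a direct verification plus an appeal to uniqueness. The only point that needs a word of care is that the existence of $\bq^*$ is not an extra assumption but a consequence of $f$ being a contraction (via Banach); the genuinely substantive statement — that $\varphi$ itself is a contraction, hence has a unique fixed point — is deferred to Lemma \ref{lem:reversible-contractivity}, and is what makes the hypothesis of this lemma legitimate.
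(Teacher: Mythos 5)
Your proof is correct, but it takes a genuinely different route from the paper's. You verify directly that $(\bq^*, \bq^*)$ is a fixed point of $\varphi$ (the key observation being that the inner argument of $f$ in the second coordinate collapses to the already-computed first coordinate $\bq^*$), and then invoke the \emph{assumed uniqueness} of $\varphi$'s fixed point to force $(\by^*, \bz^*) = (\bq^*, \bq^*)$. The paper works in the opposite direction: it starts from the fixed-point equations for $(\by^*, \bz^*)$, simplifies them to $\by^* = f(\bz^*)$ and $\bz^* = f(\by^*)$, deduces that both $\by^*$ and $\bz^*$ are fixed points of $f \circ f$, and concludes via Banach applied to the contraction $f \circ f$ that $\by^* = \bz^* = \bq^*$. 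The trade-off is this: your argument is shorter and uses the uniqueness hypothesis on $\varphi$ essentially, whereas the paper's argument only needs the \emph{existence} of a fixed point of $\varphi$ --- it shows that any fixed point of $\varphi$ must equal $(\bq^*, \bq^*)$, and thus establishes uniqueness of $\varphi$'s fixed point as a byproduct rather than assuming it. Under the lemma's stated hypotheses both proofs are fully valid, so there is no gap; the paper's version is just slightly more robust to a weakening of the hypothesis.
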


\begin{proof}
    The fixed point of $\varphi$ satisfies $(\by^*, \bz^*) = \varphi(\by^*, \bz^*)$. This is equivalent to the equations
    \begin{align*}
        \by^* &= (1-\beta)\by^* + \beta f(\bz^*), \\
        \bz^* &= (1-\beta)\bz^* + \beta f(\by^*).
    \end{align*}
    Upon simplification, this is the same as 
    $$
    \by^* = f(\bz^*), \quad \text{and} \quad \bz^* = f(\by^*).
    $$
    Applying $f$ to both equations we find that $\by^* = (f \circ f)(\by^*)$ and $\bz^* = (f \circ f)(\bz^*)$. That is, both $\by^*$ and $\bz^*$ are fixed points of $f \circ f$. Since $f$ is a contraction, $f \circ f$ is also a contraction (with contraction constant $k^2$). Therefore, by Banach's fixed point theorem, $f \circ f$ admits a unique fixed point giving $\by^* = \bz^*$. Moreover, if $\bq^*$ is the unique fixed point of $f$ then it is necessarily the unique fixed point of $f \circ f$ too, giving $\by^* = \bz^* = \bq^*$ as claimed.
\end{proof}

We now show that $\varphi$ is contractive in an appropriately chosen norm. Existence of the fixed point then follows by the Banach fixed point theorem. The sequential structure of the reversible fixed point iteration coupled with the diagonal nature of the fixed point motivates proving contractivity in $\norm{\cdot}_\infty$. In particular, we show contractivity of each component of the iteration in the arbitrary norm $\norm{\cdot}.$

\begin{lemma}\label{lem:reversible-contractivity}
    Assume that $f$ is a contraction with contraction constant $k \in [0, 1)$. Then, for all $0 < \beta < 2/(k+1)$, $\varphi$ is a contraction under $\norm{\cdot}_\infty$ with contraction constant $L = \lvert 1-\beta\rvert + \beta k.$
\end{lemma}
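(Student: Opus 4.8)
The plan is to bound $\norm{\varphi(\by_1,\bz_1) - \varphi(\by_2,\bz_2)}_\infty$ by estimating the two components separately, each in the arbitrary norm $\norm{\cdot}$, and then taking the maximum. Write $(\by_1',\bz_1') = \varphi(\by_1,\bz_1)$ and $(\by_2',\bz_2') = \varphi(\by_2,\bz_2)$, so that $\by_i' = (1-\beta)\by_i + \beta f(\bz_i)$ and $\bz_i' = (1-\beta)\bz_i + \beta f(\by_i')$. First I would handle the $\by$-component: by the triangle inequality and the Lipschitz bound on $f$,
\begin{align*}
\norm{\by_1' - \by_2'} &\leq \lvert 1-\beta\rvert\,\norm{\by_1 - \by_2} + \beta\,\norm{f(\bz_1) - f(\bz_2)} \\
&\leq \lvert 1-\beta\rvert\,\norm{\by_1 - \by_2} + \beta k\,\norm{\bz_1 - \bz_2} \\
&\leq \big(\lvert 1-\beta\rvert + \beta k\big)\,\norm{(\by_1,\bz_1) - (\by_2,\bz_2)}_\infty,
\end{align*}
where the last step uses that both $\norm{\by_1-\by_2}$ and $\norm{\bz_1-\bz_2}$ are dominated by the max norm. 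So the $\by$-component already contracts with the claimed constant $L = \lvert 1-\beta\rvert + \beta k$.

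Next I would bound the $\bz$-component, which is the slightly more delicate one since it involves $f$ evaluated at the updated points $\by_i'$. Again by the triangle inequality and Lipschitzness,
\begin{align*}
\norm{\bz_1' - \bz_2'} &\leq \lvert 1-\beta\rvert\,\norm{\bz_1 - \bz_2} + \beta\,\norm{f(\by_1') - f(\by_2')} \\
&\leq \lvert 1-\beta\rvert\,\norm{\bz_1 - \bz_2} + \beta k\,\norm{\by_1' - \by_2'}.
\end{align*}
Now substitute the bound on $\norm{\by_1' - \by_2'}$ from the first step. This gives $\norm{\bz_1' - \bz_2'} \leq \lvert 1-\beta\rvert\,\norm{\bz_1-\bz_2} + \beta k L\,\norm{(\by_1,\bz_1)-(\by_2,\bz_2)}_\infty \leq (\lvert 1-\beta\rvert + \beta k L)\,\norm{(\by_1,\bz_1)-(\by_2,\bz_2)}_\infty$. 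Since $L < 1$ (this is exactly what the range $0<\beta<2/(k+1)$ buys us — see below), we have $\lvert 1-\beta\rvert + \beta k L \leq \lvert 1-\beta\rvert + \beta k = L$, so the $\bz$-component also contracts with constant at most $L$. Taking the maximum of the two component bounds yields $\norm{\varphi(\by_1,\bz_1)-\varphi(\by_2,\bz_2)}_\infty \leq L\,\norm{(\by_1,\bz_1)-(\by_2,\bz_2)}_\infty$.

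It remains to check that $L = \lvert 1-\beta\rvert + \beta k < 1$ precisely on the interval $0 < \beta < 2/(k+1)$. I would split into cases on the sign of $1-\beta$: for $0<\beta\leq 1$, $L = 1 - \beta + \beta k = 1 - \beta(1-k) < 1$ since $\beta>0$ and $k<1$; for $1<\beta<2$, $L = \beta - 1 + \beta k = \beta(1+k) - 1 < 1$ iff $\beta < 2/(k+1)$. Combining, $L<1$ for all $\beta \in (0, 2/(k+1))$, noting $2/(k+1) > 1$ since $k<1$. The main (mild) obstacle is bookkeeping the $\bz$-component estimate correctly — in particular using $L<1$ to collapse the $\beta k L$ term back to something $\leq L$ rather than leaving a messier constant — but there is no real analytical difficulty here; contractivity of $\varphi$ in $\norm{\cdot}_\infty$ then delivers, via the Banach fixed point theorem, the existence and uniqueness of the fixed point $(\by^*,\bz^*)$ needed to invoke Lemma \ref{lem:fixed-point}, completing the proof of Theorem \ref{thm:reversible-fixed-point}.
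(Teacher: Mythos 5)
Your proposal is correct and follows essentially the same route as the paper: estimate the two components of $\varphi$ separately in the arbitrary norm $\norm{\cdot}$ and take the maximum, with the $\by$-component giving $L_1=\lvert 1-\beta\rvert+\beta k$ directly. Your handling of the $\bz$-component is a mild streamlining — by reusing the first bound and noting $\lvert 1-\beta\rvert+\beta k L\leq L$ once $L<1$, you avoid the paper's explicit quadratic-inequality analysis of $L_2=\lvert 1-\beta\rvert+\beta k\lvert 1-\beta\rvert+\beta^2k^2$ and its separate verification that $L_1>L_2$, but the underlying estimates are identical.
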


\begin{proof}
    We want to show the existence of $L \in [0, 1)$ such that
    \begin{equation}\label{eq: claim}
    \norm{\varphi(\by, \bz) - \varphi(\by', \bz')}_\infty \leq L \norm{(\by, \bz) - (\by', \bz')}_\infty \quad \forall~ (\by, \bz), (\by', \bz') \in \bRd \times \bRd.
    \end{equation}
    Write
    $$
    \norm{\varphi(\by, \bz) - \varphi(\by', \bz')}_\infty = \max \{\underbrace{\norm{\varphi_1(\by, \bz) - \varphi_1(\by', \bz')}}_{T_1}, \underbrace{\norm{\varphi_2(\by, \bz) - \varphi_2(\by', \bz')}}_{T_2} \},
    $$
    where we have defined
    $$
    \varphi_1(\by, \bz) \coloneqq (1-\beta)\by + \beta f(\bz), \quad \varphi_2(\by, \bz) = (1-\beta) \bz + \beta f((1-\beta)\by + \beta f(\bz)).
    $$
    Then showing \eqref{eq: claim} is equivalent to showing there exist $L_1, L_2 \in [0, 1)$ such that
    $$
    T_1 \leq L_1 \norm{(\by, \bz) - (\by', \bz')}_\infty, \quad T_2 \leq L_2 \norm{(\by, \bz) - (\by', \bz')}_\infty,
    $$
    and choosing $L = \max\left\{L_1, L_2\right\}$.

    For $T_1$, we estimate
    $$
    T_1 = \norm{(1-\beta)\by + \beta f(\bz) - (1-\beta)\by' - \beta f(\bz')} \leq \lvert 1-\beta\rvert\norm{\by - \by'} + \beta k \norm{\bz -\bz'}.
    $$
    Observing that
    $$
    \norm{\by - \by'} \leq \norm{(\by, \bz) - (\by', \bz')}_\infty, \quad \norm{\bz - \bz'} \leq \norm{(\by, \bz) - (\by', \bz')}_\infty,
    $$
    gives
    $$
    T_1 \leq L_1\norm{(\by, \bz) - (\by', \bz')}_\infty, \quad L_1 \coloneqq \lvert 1-\beta\rvert + \beta k.
    $$
    To find when $L_1 < 1$ we consider separately the cases $\beta =1$, $0 < \beta < 1$ and $\beta > 1$. When $\beta=1$, $L_1 = k < 1$ by assumption. When $0<\beta<1$ we require that $1-\beta+\beta k < 1$. This is equivalent to $k < 1$ which is true by assumption. When $\beta > 1$, we require that $\beta - 1 + \beta k < 1$. This is equivalent to $\beta < 2/(k+1)$.

    Similarly, for $T_2$ we estimate
    \begin{align*}
        T_2 &= \norm{(1-\beta)\bz + \beta f((1-\beta)\by + \beta f(\bz)) - (1-\beta)\bz' - \beta f((1-\beta)\by' + \beta f(\bz'))}, \\
        &\leq \lvert 1-\beta\rvert \norm{\bz - \bz'} + \beta k \norm{(1-\beta)(\by - \by') + \beta (f(\bz) - f(\bz'))}, \\
        &\leq \lvert 1-\beta \rvert \norm{\bz - \bz'} + \beta k (\lvert 1-\beta\rvert\norm{\by - \by'} + \beta k \norm{\bz - \bz'}), \\
        &\leq (\lvert 1-\beta \rvert + \beta k \lvert 1-\beta \rvert + \beta^2 k^2)\norm{(\by, \bz) - (\by', \bz')}_\infty,
    \end{align*}
    which gives
    $$
    T_2 \leq L_2\norm{(\by, \bz) - (\by', \bz')}_\infty, \quad L_2 \coloneqq \lvert 1-\beta \rvert + \beta k \lvert 1-\beta \rvert + \beta^2 k^2.
    $$
    To find when $L_2 < 1$, we again split into the cases $\beta=1$, $0< \beta < 1$ and $\beta > 1$. When $\beta = 1$, $L_2 = k^2 < 1$ by assumption. When $0 < \beta < 1$ we get the quadratic inequality
    $$
    1- \beta + \beta k (1-\beta)+ \beta^2 k^2 < 1.
    $$
    Rearranging and computing roots gives $\beta_- = -1/k$ and $\beta_+ = 0$. The inequality holds when either $\beta < \beta_-$ or $\beta > \beta_+$. Since $\beta > 0$ by definition, it holds for all $0 < \beta < 1.$ When $\beta > 1$, we get the quadratic inequality
    $$
    \beta - 1 + \beta k (\beta - 1) + \beta^2 k^2 < 1.
    $$
    Rearranging and computing roots gives $\beta_- = -1/k$ and $\beta_+ = 2/(k+1)$. The inequality holds for $\beta_- < \beta < \beta_+$ and, since we assume $\beta > 1$, this gives $1 < \beta < 2/(k+1)$.

    Thus, $\varphi$ is contractive for $0 < \beta < 2/(k+1)$, and setting $L = \max \{L_1, L_2\}$ gives \eqref{eq: claim}. Finally, we show that $L=L_1$. Observe that for $0 < \beta < 1$,
    $$
    L_1 \leq L_2 \iff 1-\beta + \beta k \leq 1-\beta + \beta k (1-\beta) + \beta^2 k^2 \iff k \geq 1,
    $$
    which is a contradiction, thus $L_1 > L_2$. Similarly, for $1 < \beta < 2/(k+1)$,
    $$
    L_1 \leq L_2 \iff \beta - 1 + \beta k \leq \beta - 1 + \beta k (\beta - 1) + \beta^2 k^2 \iff \beta \geq 2/(k+1),
    $$
    which is a contradiction, thus $L_1 > L_2$.

    To summarise, we have shown that for all $0 < \beta < 2/(k+1)$, $\varphi$ is a contraction under $\norm{\cdot}_\infty$ with contraction constant $L = \lvert 1-\beta \rvert + \beta k$.
\end{proof}

\begin{corollary}
    Assume that $f$ is a contraction with contraction constant $k \in [0, 1)$. Then, for all $0 < \beta < 2/(k+1)$, the reversible fixed point iteration $(\by_{n+1}, \bz_{n+1})$ converges linearly to the fixed point $(\by^*, \by^*)$ with speed $\lvert 1-\beta\rvert + \beta k$.
\end{corollary}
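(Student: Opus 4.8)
The plan is to derive this as an immediate consequence of the two preceding lemmas together with the Banach fixed point theorem; almost all of the real work has already been done in Lemma~\ref{lem:reversible-contractivity}. First I would invoke that lemma: for every $0 < \beta < 2/(k+1)$ the map $\varphi$ associated to the reversible iteration is a contraction on $\bRd \times \bRd$ under $\norm{\cdot}_\infty$ with constant $L = \lvert 1-\beta\rvert + \beta k < 1$. Since $\bRd \times \bRd$ equipped with $\norm{\cdot}_\infty$ is a finite-dimensional normed vector space, it is complete, so the Banach fixed point theorem applies and yields: (i) $\varphi$ has a unique fixed point $(\by^*, \bz^*)$, and (ii) the iteration $(\by_{n+1}, \bz_{n+1}) = \varphi(\by_n, \bz_n)$ converges to $(\by^*, \bz^*)$ from the initial condition $(\by_0, \bz_0) = (\vec{0}, \vec{0})$ (in fact from any starting point).

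Next I would identify the limit. Having just established that $\varphi$ admits a unique fixed point, the hypotheses of Lemma~\ref{lem:fixed-point} are satisfied, and that lemma gives $\by^* = \bz^* = \bq^*$, where $\bq^*$ is the unique fixed point of $f$. Hence the limit is the diagonal point $(\by^*, \by^*)$, as stated. For the convergence speed I would simply iterate the contraction estimate: setting $e_n \coloneqq \norm{(\by_n, \bz_n) - (\by^*, \by^*)}_\infty$, the contraction property gives $e_{n+1} = \norm{\varphi(\by_n, \bz_n) - \varphi(\by^*, \by^*)}_\infty \leq L\, e_n$, so $e_n \leq L^n e_0$ by induction; this is precisely linear convergence with speed $L = \lvert 1-\beta\rvert + \beta k$, and in particular $\norm{\by_n - \by^*} \leq e_n \leq L^n e_0$. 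I would also add the remark already flagged in the main text: $L$ equals $L_1$, the contraction constant of the first component $\varphi_1$, which is the rate of the single-step relaxed iteration $\by_{n+1} = (1-\beta)\by_n + \beta f(\by_n)$, so the reversible scheme inherits the speed of the relaxed iteration.

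There is no serious obstacle here: the substantive step — verifying contractivity of $\varphi$ and computing the sharp constant across the entire admissible range of $\beta$ — is Lemma~\ref{lem:reversible-contractivity}, and this corollary is just the assembly. The only point deserving a moment's attention is the logical ordering: Lemma~\ref{lem:fixed-point} takes "$\varphi$ has a unique fixed point" as a hypothesis, so one must first obtain that fact from Lemma~\ref{lem:reversible-contractivity} and the Banach fixed point theorem, and only then apply Lemma~\ref{lem:fixed-point} to pin the fixed point down as $\bq^*$.
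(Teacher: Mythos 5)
Your proposal is correct and follows essentially the same route as the paper: invoke Lemma~\ref{lem:reversible-contractivity} to get the contraction constant $L = \lvert 1-\beta\rvert + \beta k$, identify the limit as the diagonal point via Lemma~\ref{lem:fixed-point}, and iterate the estimate $\norm{(\by_n,\bz_n)-(\by^*,\by^*)}_\infty \leq L\norm{(\by_{n-1},\bz_{n-1})-(\by^*,\by^*)}_\infty$ to obtain the $L^n$ bound. Your extra care about the logical ordering (using Banach's theorem on the complete space $(\bRd\times\bRd, \norm{\cdot}_\infty)$ to establish existence of the unique fixed point of $\varphi$ before applying Lemma~\ref{lem:fixed-point}) is a welcome clarification but not a different argument.
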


\begin{proof}
    Using Lemma \ref{lem:reversible-contractivity}, we write
    $$
    \norm{(\by_{n}, \bz_{n}) - (\by^*, \by^*)}_\infty = \norm{\varphi(\by_{n-1}, \bz_{n-1}) - \varphi(\by^*, \by^*)}_\infty \leq L \norm{(\by_{n-1}, \bz_{n-1}) - (\by^*, \by^*)}_\infty.
    $$
    Iterating this estimate gives
    $$
    \norm{(\by_{n}, \bz_{n}) - (\by^*, \by^*)}_\infty \leq L^{n}\norm{(\by_0, \bz_0) - (\by^*, \by^*)}_\infty,
    $$
    as claimed. Moreover, since the scheme is initialised with $\by_0 = \bz_0$, this implies linear convergence of each component (in the arbitrary norm $\norm{\cdot}$) to the fixed point $\bq^*$ of $f$, namely,
    $$
    \norm{\by_n - \bq^*} \leq L^n \norm{\by_0 - \bq^*}, \quad \text{and} \quad \norm{\bz_n - \bq^*} \leq L^n \norm{\bz_0 - \bq^*}.
    $$
\end{proof}

\section{Experiments}
\label{app:experiments}
\subsection{Language Modelling on Wikitext-103}
The model architecture is given by equation \eqref{eq:transformer}. We use an embedding size of $d=1024$, with $\mathbf{z}\in\mathbb{R}^d$ and $W_{QKV}\in \mathbb{R}^{3d\times d}$. The MLP has depth 2 with hidden size $4d$; the GELU \cite{hendrycks2016gaussian} activation function is used. The number of attention heads was 8. The token sequence length was 512 with a batch size of 32.

The reversible fixed point solver was applied to the equilibrium function for $N$ steps. The number of function evaluations is $2N$ and can be seen for each model in Table \ref{tab:perplexity-wikitext} and Table \ref{tab:nfe-wikitext}. We choose the tolerance to be $\epsilon = 10^{-3}$. The relaxation parameter for all models was $\beta=0.5$.

We use the Adam optimiser with a weight decay of $0.1$ and a peak learning rate of $10^{-3}$ \cite{loshchilov2019decoupled}. A cosine learning rate decay schedule was used with linear warm-up for $4000$ steps. Dropout was applied to the RevDEQ block with probability $0.1$. Training was performed for a maximum of 40 epochs.

In Table \ref{tab:perplexity-wikitext}, the Transformer-XL and DEQ results are from \cite{bai2021stabilizing}. In Table \ref{tab:nfe-wikitext}, the DEQ results with and without Jacobian regularisation are from \cite{bai2021stabilizing}.

\subsection{Image Classification on CIFAR-10}
\paragraph{Single-scale} The input image is first passed through an encoder: 3x3 convolutional layer (`same' padding) with 64 output channels, followed by a GroupNorm layer. Next we apply the RevDEQ block: the first 3x3 convolution (`same' padding) increases the channels dimension to 128 (width expansion equal to 2), the second convolution decreases the channel dimension back to 64. Finally, we average pool to a 4x4 image and the linear classification layer is applied to compute the logits.

The reversible fixed point solver was applied to the equilibrium function for $N$ steps. The number of function evaluations is $2N$ and can be seen in Table \ref{tab:cifar10}. We choose the tolerance to be $\epsilon = 10^{-6}$. The relaxation parameter was taken as $\beta=0.8$.

We use the Adam optimiser with a weight decay of $10^{-4}$ and an initial learning rate of $10^{-3}$ \cite{loshchilov2019decoupled}. The learning rate schedule is a decay on plateau schedule; we half the learning rate if the training loss does not increase for 10 steps (accumulated over the past 200 steps). Training is performed for a maximum of $100$ epochs or if the learning rate drops below $10^{-6}$. The batch size was 64.

\paragraph{Multi-scale} The input image is first passed through an encoder (same as single-scale) that increases the channel dimension to $C_1$, where $C_1$ is the channel dimension of the first scale. This is followed by a BatchNorm layer and ReLU. Next we apply each scale of the multi-scale model; the parameters at each scale are the channel dimension $C_i$ and the width expansion inside the equilibrium function. The reversible fixed point parameters are the maximum number of steps $N_i$ and the tolerance $\epsilon_i$, which is fixed to $\epsilon_i=10^{-6}$ for all scales $i$. The relaxation parameter for all scales was $\beta=0.8$. The parameters for each model are given in Table \ref{tab:app-cifar10}.

After each RevDEQ block we apply downsampling; for scales $i=1, 2, 3$ we use the pre-activation downsampling block from \cite{he2016identity} with BatchNorm. At the final scale, $i=4$, we downsample using a global average pooling operation. Finally, the linear classification layer is applied to compute the logits.

The optimisation setup is the same as the single-scale description but with an initial learning rate of $3 \times 10^{-4}$.

In Table \ref{tab:cifar10}, the monDEQ results are from \cite{winston2020monotone}; the pcDEQ results are from \cite{gabor2024positive}; the DEQ, MDEQ and explicit ResNet results are from \cite{bai2020multiscale}. The test accuracy is shown as \emph{mean $\pm$ std} over 3 repeats.

\begin{table}
    \centering
    \renewcommand{\arraystretch}{1.3}
    \begin{tabular}{c c c c}
         \toprule
         Model & Channels & Width Expansion & Solver Steps \\
         \midrule
         RevDEQ (170K) & [32, 32, 32, 32] & [1, 2, 2, 1] & [1, 4, 4, 1] \\
         RevDEQ (5M) & [64, 128, 128, 256] & [2, 4, 4, 2] & [1, 4, 4, 1] \\
         RevDEQ (10M) & [128, 256, 256, 128] & [1, 3, 3, 1] & [4, 4, 4, 4] \\
         \bottomrule
    \end{tabular}
    \vspace{1ex}
    \caption{Multi-scale RevDEQ hyper-parameters.}
    \label{tab:app-cifar10}
\end{table}

\end{document}